\newtheorem{theorem}{Theorem}[section]
\newtheorem{corollary}{Corollary}[section]
\newtheorem{remark}{Remark}[section]
\newtheorem{lemma}{Lemma}[section] 
\title{Modified Step Size for Enhanced Stochastic Gradient Descent: Convergence and Experiments}
\author{
{\includegraphics[scale=0.06]{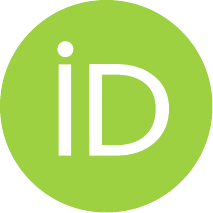}\hspace{1mm}M. Soheil Shamaee}\\
	Department of  Computer Science,\\ 
   Faculty of Mathematical Science, \\
   {University of Kashan}, Kashan, Iran.\\
	\texttt{soheilshamaee@kashanu.ac.ir} \\
	\And
 {\includegraphics[scale=0.06]{orcid.pdf}\hspace{1mm}S. Fathi Hafshejani} \\
	Department of Applied Mathematics,\\
	Shiraz University of Technology,\\
	 Shiraz, Iran\\
	\texttt{s.fathi@sutech.ac.ir} \\
}
\begin{document}
\maketitle

\begin{abstract}
This paper introduces a novel approach to enhance the performance of the stochastic gradient descent (SGD) algorithm by incorporating a modified decay step size based on $\frac{1}{\sqrt{t}}$. The proposed step size integrates a logarithmic term, leading to the selection of smaller values in the final iterations. Our analysis establishes a convergence rate of $O(\frac{\ln T}{\sqrt{T}})$ for smooth non-convex functions without the Polyak-Łojasiewicz condition. To evaluate the effectiveness of our approach, we conducted numerical experiments on image classification tasks using the FashionMNIST, and CIFAR10 datasets, and the results demonstrate significant improvements in accuracy, with enhancements of $0.5\%$ and  $1.4\%$ observed, respectively, compared to the traditional $\frac{1}{\sqrt{t}}$ step size. The source code can be found at \\\url{https://github.com/Shamaeem/LNSQRTStepSize}
\end{abstract}

\keywords{Stochastic gradient descent \and decay step size\and convergence rate.}

\section*{\centerline{ 1. Introduction}}\setcounter{section}{1}\setcounter{theorem}{0}

Stochastic gradient descent (SGD) has a rich historical background, originating from the influential work by Robbins and Monro \cite{robbins1951stochastic}. In the realm of modern machine learning, SGD has emerged as a fundamental optimization algorithm for training deep neural networks (DNNs), which have achieved remarkable performance across diverse domains such as image classification \cite{krizhevsky2017imagenet,krizhevsky2009learning}, object detection \cite{redmon2017yolo9000}, and machine translation \cite{zhang2015deep}.

The selection of an appropriate step size, often referred to as the learning rate, plays a pivotal role in the convergence behavior of SGD.  If the step size value is too large, it can prevent SGD iterations from reaching the optimal point, leading to instability and divergence. On the other hand, excessively small step size values can result in slow convergence and hinder the algorithm's ability to escape suboptimal local minima \cite{mishra2019polynomial}. To tackle these challenges, researchers have proposed various schemes to determine the step size dynamically during the optimization process.

One notable approach is the Armijo line search method, initially introduced by Vaswani et al. \cite{vaswani2019painless}, which provides theoretical guarantees for strong-convex, convex, and non-convex objective functions. Another strategy, proposed by Gower et al. \cite{gower2019sgd}, combines a constant learning rate with a decreasing learning rate schedule. Their algorithm starts with a fixed learning rate and transitions to a decreasing schedule after a specified number of iterations, often determined by the problem's condition number. While this technique ensures convergence for strongly convex functions, it necessitates prior knowledge of the condition number and is not directly applicable to non-convex problems. 

Decay step size is a commonly employed strategy in SGD to improve the convergence of optimization algorithms {\cite{Loshchilov,wang2021convergence}}. By gradually reducing the step size over iterations, decay step size methods facilitate finer adjustments in parameter updates, leading to improved convergence behavior and enhanced optimization performance {\cite{Tao,li2021second}.}
Among the various decay step sizes used in SGD, the $\frac{1}{\sqrt{t}}$ step size has been widely used due to its ease of implementation and the significant advantage of not requiring derivative information. For instance, this step size has been recognized for its excellent performance in binary classification, making it an effective choice {\cite{Tao}}. Additionally, it exhibits favorable efficiency in the context of deep neural networks.

During the training of deep neural networks, the use of this step size encounters a fundamental challenge. While the step size value decreases appropriately in the early iterations, it remains excessively large in the final iterations. This leads to the algorithm's inability to converge to the optimal point. As a result, the SGD algorithm with $\frac{1}{\sqrt{t}}$ step size fails to achieve the desired accuracy for deep neural networks. To address this limitation, we propose an enhanced version of the $\frac{1}{\sqrt{t}}$ step size that incorporates the $\ln t$ function into its definition. By introducing this modification, our goal is to improve the accuracy and loss function compared to the original $\frac{1}{\sqrt{t}}$ step size commonly used in SGD.

{
Smith in \cite{Smith} introduced the efficient method of setting the step size, known as the cyclical learning rate. Utilizing cyclical learning rates for training neural networks can yield substantial enhancements in accuracy, eliminating the need for manual tuning and often requiring fewer iterations for convergence \cite{Smith}. Building on this concept, Loshchilov and Hutter presented a warm restart technique for SGD in \cite{Loshchilov}. This approach eliminates the necessity of computing gradient information for adjusting the step size in each iteration. Warm restarts operate by initializing the learning rate to a specific value $\eta_0$ at each restart, scheduling its subsequent decrease \cite{Loshchilov}. Additionally, studies have revealed that warm restarted SGD exhibits significantly improved efficiency, taking notably less time compared to traditional learning rate adjustment strategies \cite{Vrbanc}. Over recent years, a variety of step sizes accompanied by warm restarts have been proposed \cite{mishra2019polynomial,Xu}. Extending the notion of cosine step size, Vrbančič introduced three distinct step sizes accompanied by warm restarts \cite{Vrbanc}.}

Building upon the insights from previous research, we present a novel approach in this work that utilizes a novel step size combined with the warm restarts technique for SGD. The key contributions of this paper can be summarized as follows:
\begin{itemize}
    \item The new step size exhibits a distinct behavior compared to the $\frac{1}{\sqrt{t}}$ step size. By incorporating both $\frac{1}{\sqrt{t}}$ and $\ln t$, the step length gradually decreases in the final iterations, leading to convergence towards the optimal point. The impact of this modification will be demonstrated through the numerical results.
    \item We demonstrate the convergence rate of $O(\frac{\ln T}{\sqrt{T}})$ for smooth non-convex functions, without requiring the Polyak-Łojasiewicz (PL) condition.
    \item 
    { We evaluate the performance of the new step size through extensive experiments on two popular image classification datasets, that is, FashionMNIST and CIFAR10. The results indicate significant improvements in accuracy, with enhancements of $0.5\%$ and $1.4\%$ observed, respectively, when compared to the traditional $\frac{1}{\sqrt{t}}$ step size.}
In addition, we conduct SGD experiments on binary classification tasks using five diverse datasets: a1a, a2a, mushrooms, rcv1, and w1a. The results demonstrate that the new proposed step size consistently outperforms other step sizes in terms of accuracy and loss function.
\end{itemize}
The paper is organized as follows: Section 2 introduces the new step size, providing an overview of its formulation and properties. In Section 3, we analyze the convergence rates of the proposed step size on smooth non-convex functions, demonstrating its impressive $O(\frac{\ln T}{\sqrt{T}})$ convergence rate. Section 4 presents and discusses the numerical results obtained using the new decay step size, highlighting its effectiveness in improving optimization performance. Finally, Section 5 concludes the paper by summarizing the findings and drawing insightful conclusions from our study.\\
{
In this paper, we use the following notational conventions:
The Euclidean norm of a vector is denoted by $\|.\|$. The non-negative  {orthant} and positive orthant of $\Bbb{R}^d$ are denoted by $\Bbb{R}_+^d$ and $\Bbb{R}_{++}^d$, respectively. We use the notation $f(t)=O(g(t))$ to indicate that there exists a positive constant $\omega$ such that $f(t)\leq \omega g(t)$ for all $t\in \Bbb{R}_{++}$.
}

\section*{\centerline{ 2. New Step Size}} \setcounter{section}{2}\setcounter{theorem}{0}

{In this section, we briefly introduce the main optimization problem and state some assumptions. Afterward, we will present the new step size and algorithm. 
\\
We consider the following optimization  problem:  \begin{equation}\label{cost-function}
\min_{x\in \Bbb{R}^d}f(x)=\min_{x\in \Bbb{R}^d} \frac{1}{n}\sum_{i=1}^{n}f_i(x),
\end{equation}
where $f_i :~\Bbb{R}^d\rightarrow \Bbb{R}$ is the loss function for the $i$-th training sample over the variable $x\in\Bbb{R}^d$ and $n$ denotes the number of samples. This minimization problem is central in machine learning. {
Several iterative approaches for solving equation (1) are known \cite{Nocedal}, and SGD is particularly popular when the dimensionality, $n$, is extremely large \cite{robbins1951stochastic,Nemirovski}}. SGD uses a random training sample $i_k \in \{1,2,...,n\}$ to update $x$ using the rule:
\begin{equation}\label{new_point}
    x_{k+1}=x_k-\eta_k\nabla f_{i_k}(x_k),
\end{equation}
in which $\eta_k$ is the step size used in iteration $k$ and $\nabla f_{i_k}(x)$  is the (average) gradient of the loss function(s) \cite{vaswani2019painless}.}

\section*{\centerline{ 2.1. Assumptions}}
{
Let $f: \mathbb{R}^d \rightarrow \mathbb{R}$ be the objective function, and consider the SGD algorithm. We make the following assumptions \cite{li2021second}:
\begin{itemize}
    \item \textbf{ $A1$:}
The function $f: \mathbb{R}^d \rightarrow \mathbb{R}$ is L-smooth, which implies that for all $x$ and $y$ in the domain of $f$, we have:
\begin{equation}
f(y) \leq f(x) + \langle \nabla f(x), y - x \rangle + \frac{L}{2}\|y - x\|^2,
\end{equation}
where, $\nabla f(x)$ denotes the gradient of $f$ at point $x$, and $L$ is a positive constant representing the Lipschitz constant of $f$. 
\item \textbf{$A2$: }
For any iteration $t \in \{1, 2, \ldots, T\}$ of the SGD algorithm, we assume that the expected square norm of the difference between the stochastic gradient $g_t$ and the true gradient $\nabla f(x_t)$ at the current iterate $x_t$ is bounded as follows:
\begin{equation}
\Bbb{E}_t\left[\|g_t - \nabla f(x_t)\|^2\right] \leq \sigma^2,
\end{equation}
where $\sigma^2$ is a positive constant. 
\end{itemize}}

\subsection*{\centerline{ 2.2. The New Step Size}}

In this paper, we address a limitation associated with the $\frac{1}{\sqrt{t}}$ step size, where its value fails to decrease adequately during the final iterations. This characteristic poses a challenge in reaching the optimal point in some problems. To overcome this limitation, we propose a modified step size approach that combines the $\frac{1}{\sqrt{t}}$ function with the logarithmic function, $\ln t$, in an effort to effectively reduce the step size.

The motivation behind incorporating the $\ln t$ function lies in its gradual growth pattern, which enables a more controlled reduction in the step size when compared to the original $\frac{1}{\sqrt{t}}$ step size. By introducing the $\ln t$ function into the formulation, we aim to achieve a more refined and optimized step size throughout the optimization process. In this regard, we define the new step size as:
\begin{equation}\label{new_st}
    \eta_t=\frac{\eta_0}{\sqrt{t}+\ln t},\quad\eta_0\in(0,1],\quad t=1,2,...,T.
\end{equation}
\begin{figure}
  \centering
    \includegraphics[width=0.4\textwidth]{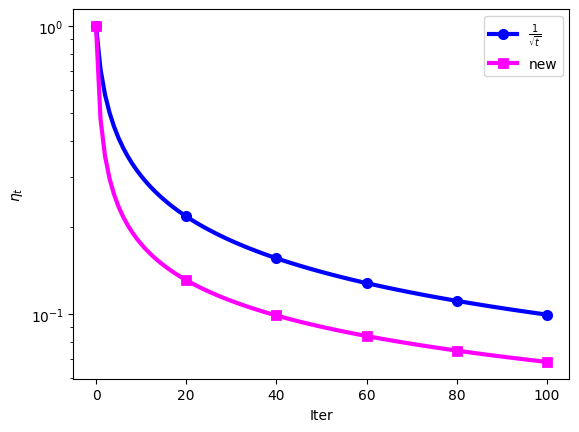}
  \caption{The comparison between the $\frac{1}{\sqrt{t}}$ step size and the modified step size incorporating $\ln t$ function}
  \label{fig:new_step}
\end{figure}
Figure \ref{fig:new_step} illustrates the behavior of two step sizes: the original $\frac{1}{\sqrt{t}}$ step size and the newly proposed step size. The graph visually demonstrates how the new step size consistently selects smaller values, particularly during the final iterations of the SGD algorithm.
\subsection*{\centerline{ 2.3. Algorithm}}
{In this paper, we employ the warm restart Algorithm \ref{alg1} with the  same number of epochs in the inner loop, i.e., $T$.  Algorithm \ref{alg1} is initiated with the provided initial step size $\eta_0$, the number of inner iterations $T$, the number of outer epochs $l$, and the initial point $x_0$. The algorithm consists of the outer and the inner loop. In each inner loop, the SGD with the new step size is executed and the point is updated. 
It is important to note that Algorithm \ref{alg1} was introduced in \cite{li2021second}. It becomes evident that when $l = 1$, Algorithm \ref{alg1} transforms into the SGD algorithm.
\begin{algorithm}
\caption{SGD with warm restarts based on the new step size.}
\label{alg1}
 {Input:}~Initial step size $\eta_0$, initial point $x_0$, the number of outer and inner iterations, i.e., $l$ and $T$.\\
\For {$i=1,...,l$}{
 \For {$t=1,...,T$}{
 Run SGD with the new  step size $\frac{\eta_0}{\sqrt{t}+\ln t}$
}
}
\end{algorithm}
}

\subsection*{\centerline{ 3. Convergence}} \setcounter{section}{3}\setcounter{theorem}{0}

{In this section, we demonstrate that Algorithm \ref{alg1} using the newly proposed step size achieves a convergence rate of $O(\frac{\ln T}{\sqrt{T}})$ for smooth non-convex functions without the PL condition. Note that, the PL condition initially proposed by Polyak \cite{Polyak} and Łojasiewicz \cite{lojasiewicz}, stands as a fundamental cornerstone in demonstrating linear convergence rates for non-convex functions \cite{li2021second}. 
To establish the convergence results, we initially demonstrate an $O(\frac{\ln T}{\sqrt{T}})$ convergence rate for a single outer iteration of Algorithm \ref{alg1}, which corresponds to the SGD algorithm. Subsequently, we extend the proof to encompass $l$ outer iterations. In this regard, we first introduce two preliminary lemmas \cite{li2021second,wang2021convergence}.
}
\begin{lemma}\label{1}
    For the new step size given by (\ref{new_st}), we have:
\begin{equation*}
\sum_{t=1}^T\eta_t\geq \eta_0(\sqrt{T}-1).
\end{equation*}
\end{lemma}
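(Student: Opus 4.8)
The plan is to reduce the logarithmically-modified step size to a clean multiple of the familiar $\frac{1}{\sqrt{t}}$ step size and then invoke a standard integral comparison. The crucial observation is that for every $t \geq 1$ one has $\ln t \leq \sqrt{t}$, so the extra $\ln t$ appearing in the denominator of (\ref{new_st}) can never more than double it; this converts the lower bound we want into a lower bound on $\sum_{t=1}^T \frac{1}{\sqrt{t}}$, which is classical.

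First I would verify the elementary inequality $\ln t \leq \sqrt{t}$ on $[1,\infty)$. Setting $\varphi(t) = \sqrt{t} - \ln t$, one computes $\varphi'(t) = \frac{1}{2\sqrt{t}} - \frac{1}{t} = \frac{\sqrt{t} - 2}{2t}$, which vanishes only at $t = 4$, is negative on $(1,4)$ and positive on $(4,\infty)$; hence $\varphi$ attains its minimum over $[1,\infty)$ at $t = 4$, where $\varphi(4) = 2 - \ln 4 > 0$. Therefore $\varphi(t) > 0$, i.e. $\ln t \leq \sqrt{t}$, throughout the range of interest.

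With this in hand the denominator of $\eta_t$ satisfies $\sqrt{t} + \ln t \leq 2\sqrt{t}$, so termwise $\eta_t = \frac{\eta_0}{\sqrt{t} + \ln t} \geq \frac{\eta_0}{2\sqrt{t}}$. Summing and pulling out the constant gives $\sum_{t=1}^T \eta_t \geq \frac{\eta_0}{2}\sum_{t=1}^T \frac{1}{\sqrt{t}}$. It then remains to lower-bound this harmonic-type sum. Since $s \mapsto s^{-1/2}$ is decreasing, each term dominates the corresponding unit integral, $\frac{1}{\sqrt{t}} \geq \int_t^{t+1} s^{-1/2}\,ds$, and summing from $t = 1$ to $T$ yields $\sum_{t=1}^T \frac{1}{\sqrt{t}} \geq \int_1^{T+1} s^{-1/2}\,ds = 2(\sqrt{T+1} - 1) \geq 2(\sqrt{T} - 1)$.

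Combining the two estimates produces $\sum_{t=1}^T \eta_t \geq \frac{\eta_0}{2}\cdot 2(\sqrt{T} - 1) = \eta_0(\sqrt{T} - 1)$, which is exactly the claimed bound. The only step that demands any real care is the comparison $\ln t \leq \sqrt{t}$, since everything downstream hinges on the resulting factor-of-two control of the denominator; once that is secured, the remainder is the textbook integral estimate for $\sum t^{-1/2}$, and I do not anticipate a genuine obstacle.
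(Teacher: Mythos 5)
Your proof is correct and follows essentially the same route as the paper's: use $\ln t \leq \sqrt{t}$ to bound the denominator by $2\sqrt{t}$, then lower-bound $\sum_{t=1}^T t^{-1/2}$ by an integral. If anything, yours is the more careful version—the paper asserts $\sum_{t=1}^T \frac{1}{2\sqrt{t}} = \int_1^T \frac{1}{2\sqrt{t}}\,dt$, an equality that should really be an inequality, whereas you justify it properly via the term-by-term comparison $\frac{1}{\sqrt{t}} \geq \int_t^{t+1} s^{-1/2}\,ds$ and also supply a proof of the elementary inequality $\ln t \leq \sqrt{t}$ that the paper takes for granted.
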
 
\begin{proof}
To prove this lemma, we utilize the fact that $\ln t<\sqrt{t}$ for all $t\in[1,~T]$. Thus, we have:
\begin{eqnarray}
\sum_{t=1}^T\eta_t=\eta_0\sum_{t=1}^T\frac{1}{\sqrt{t}+\ln t} \geq \eta_0\sum_{t=1}^T\frac{1}{2\sqrt{t}}=\eta_0\int_1^T\frac{1}{2\sqrt{t}}dt=\eta_0(\sqrt{T}-1),
\end{eqnarray}
where the first inequality is derived from $\ln t\leq \sqrt{t}$ for all $t\geq 1$.
\end{proof}
\begin{lemma}\label{2}
For the new step size given by (\ref{new_st}), we have:
\begin{equation*}
\sum_{t=1}^T\eta_t^2\leq \eta_0^2\ln T.
\end{equation*}

\end{lemma}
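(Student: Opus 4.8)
The plan is to reduce the sum of squared step sizes to a harmonic-type series and then bound it by a logarithm through an integral comparison, mirroring the argument used for Lemma~\ref{1}. First I would factor out the constant, writing $\sum_{t=1}^T \eta_t^2 = \eta_0^2 \sum_{t=1}^T \frac{1}{(\sqrt{t}+\ln t)^2}$, so that everything reduces to estimating the dimensionless sum $\sum_{t=1}^T (\sqrt{t}+\ln t)^{-2}$ from above.

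The key observation is that $\ln t \geq 0$ for every $t \geq 1$, so the logarithmic term only enlarges the denominator: $(\sqrt{t}+\ln t)^2 \geq (\sqrt{t})^2 = t$. Dropping $\ln t$ in this direction therefore yields the termwise bound $\frac{1}{(\sqrt{t}+\ln t)^2} \leq \frac{1}{t}$, which replaces the awkward denominator by a clean harmonic term. This is the counterpart of the inequality $\ln t \leq \sqrt{t}$ used in Lemma~\ref{1}, but applied in the opposite direction. Having arrived at $\sum_{t=1}^T \eta_t^2 \leq \eta_0^2 \sum_{t=1}^T \frac{1}{t}$, I would close the argument with the standard sum-to-integral comparison for a decreasing sequence: since $1/s$ is decreasing, the tail of the sum is dominated by $\int_1^T \frac{ds}{s} = \ln T$, which matches the claimed bound $\eta_0^2 \ln T$.

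The main obstacle I anticipate is the boundary term at $t=1$. The integral comparison $\sum_{t=2}^T \frac{1}{t} \leq \int_1^T s^{-1}\,ds = \ln T$ is clean for the tail, but the first term equals $1$ (indeed $\eta_1 = \eta_0$, since $\ln 1 = 0$), so a literal accounting produces $\eta_0^2(1+\ln T)$ rather than $\eta_0^2\ln T$. I would therefore either read the statement in the asymptotic (large $T$) regime, where this additive constant is absorbed into the $O(\frac{\ln T}{\sqrt{T}})$ rate that ultimately drives the convergence analysis, or sharpen the estimate by noting that the inequality $\frac{1}{(\sqrt{t}+\ln t)^2} < \frac{1}{t}$ is strict for $t \geq 2$ and that the accumulated deficit more than compensates for the isolated $t=1$ term. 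Apart from this constant-level bookkeeping, the estimate is entirely routine, and the essential content is the single inequality $(\sqrt{t}+\ln t)^2 \geq t$ together with the harmonic-series integral bound.
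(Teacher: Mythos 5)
Your approach coincides with the paper's: factor out $\eta_0^2$, use $\ln t \geq 0$ to get $(\sqrt{t}+\ln t)^2 \geq t$ termwise, and bound the resulting harmonic-type sum by a logarithm. However, the boundary issue you flagged is not mere bookkeeping pedantry --- it exposes a genuine error in the paper's own proof. The paper closes the argument by writing $\sum_{t=1}^T \frac{1}{t} = \ln T$, which is false: the harmonic number $H_T$ strictly exceeds $\ln T$ for every $T \geq 1$ (indeed $H_T = \ln T + \gamma + o(1)$ with $\gamma \approx 0.577$). Your honest accounting, giving $\eta_0^2(1+\ln T)$ via the tail comparison $\sum_{t=2}^T \frac{1}{t} \leq \int_1^T \frac{ds}{s}$, is the correct conclusion of this line of argument.

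Moreover, the lemma as stated is actually false for small $T$: at $T=1$ the left side is $\eta_1^2 = \eta_0^2$ (since $\ln 1 = 0$) while the right side is $\eta_0^2 \ln 1 = 0$, and direct computation shows the inequality also fails at $T=2,3,4$ (e.g., at $T=4$ the normalized sum is $\approx 1.437$ versus $\ln 4 \approx 1.386$). Your second proposed repair --- that the strict deficit $\frac{1}{(\sqrt{t}+\ln t)^2} < \frac{1}{t}$ for $t\geq 2$ eventually absorbs the $t=1$ term --- does work, but only from $T=5$ onward: one checks the inequality numerically at $T=5$ and then observes that each increment of the left side, $\frac{1}{(\sqrt{T+1}+\ln(T+1))^2} < \frac{1}{T+1}$, is smaller than the increment $\ln(T+1)-\ln T > \frac{1}{T+1}$ of the right side, so the gap only widens. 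For the purpose of Theorem 3.1 your first reading is the right one: replacing $\eta_0^2 \ln T$ by $\eta_0^2(1+\ln T)$ merely adds a term $\frac{L\sigma^2\eta_0}{\sqrt{T}-1}$ to the bound, which does not affect the $O\left(\frac{\ln T}{\sqrt{T}}\right)$ rate. So your proposal is sound, and it is the paper's proof, not yours, that needs the correction.
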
 
\begin{proof}
To prove this lemma, we use the fact that $\ln t\geq 0$ and $\sqrt{t}+\ln t\geq \sqrt{t}$ for all $t\geq 1$. Hence, we have:
\begin{eqnarray}
\sum_{t=1}^T\eta_t^2=\eta_0^2\sum_{t=1}^T\frac{1}{(\sqrt{t}+\ln t)^2} \leq \eta_0^2\sum_{t=1}^T\frac{1}{t}=\eta_0^2\ln T,
\end{eqnarray}
where the first inequality is obtained from the fact that $\ln t\geq 0$ for all $t\geq 1$.
\end{proof} 

These preliminary lemmas provide important insights and bounds that will be used to establish the convergence results for the modified step size in subsequent sections.

\begin{lemma}[Lemma 7.1 in \cite{wang2021convergence}]\label{pre_proof}
Assuming that $f$ is an $L$-smooth function and Assumption (A2) is satisfied, if  $\eta_t\leq \frac{1}{cL}$, then SGD guarantees:
\begin{equation}\label{for_lemma7}
    \frac{\eta_t}{2}\Bbb{E}[\|\nabla f(x_t)\|^2]\leq \Bbb{E}[f(x_t)]-\Bbb{E}[f(x_{t+1})]+\frac{L\eta^2_t\sigma^2}{2}.
\end{equation}
\end{lemma}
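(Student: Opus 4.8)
The plan is to apply the $L$-smoothness inequality (A1) directly to the two successive iterates and then pass to expectations, turning the deterministic descent bound into a stochastic one. First I would invoke (A1) with $y = x_{t+1}$ and $x = x_t$; substituting the SGD update $x_{t+1} = x_t - \eta_t g_t$ into the inner-product and quadratic terms yields
\[
f(x_{t+1}) \leq f(x_t) - \eta_t \langle \nabla f(x_t), g_t\rangle + \frac{L\eta_t^2}{2}\|g_t\|^2.
\]
The whole argument rests on this single inequality; everything that follows is taking expectations and collecting terms.

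Next I would take the conditional expectation $\Bbb{E}_t[\cdot]$ given $x_t$. Here I would use the (implicit) unbiasedness of the stochastic gradient, $\Bbb{E}_t[g_t] = \nabla f(x_t)$, so that the cross term collapses to $-\eta_t\|\nabla f(x_t)\|^2$. The remaining task is to control $\Bbb{E}_t[\|g_t\|^2]$, for which I would use the bias--variance decomposition $\Bbb{E}_t[\|g_t\|^2] = \|\nabla f(x_t)\|^2 + \Bbb{E}_t[\|g_t - \nabla f(x_t)\|^2]$ (the cross term again vanishing by unbiasedness) and then bound the second summand by $\sigma^2$ via Assumption (A2). This gives
\[
\Bbb{E}_t[f(x_{t+1})] \leq f(x_t) - \eta_t\left(1 - \frac{L\eta_t}{2}\right)\|\nabla f(x_t)\|^2 + \frac{L\eta_t^2\sigma^2}{2}.
\]

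The final step is to use the step-size restriction $\eta_t \leq \frac{1}{cL}$ to convert the coefficient $1 - \frac{L\eta_t}{2}$ into the clean factor $\frac{1}{2}$: since $\eta_t \leq \frac{1}{cL}$ forces $\frac{L\eta_t}{2} \leq \frac{1}{2c} \leq \frac{1}{2}$ (for $c \geq 1$), we obtain $\eta_t(1 - \frac{L\eta_t}{2}) \geq \frac{\eta_t}{2}$. Rearranging and then taking total expectation via the tower property yields the claimed inequality. I expect the only genuinely delicate point to be the unbiasedness of $g_t$: Assumption (A2) controls only the variance, so the proof tacitly requires $\Bbb{E}_t[g_t] = \nabla f(x_t)$, which must be assumed (or read off the sampling scheme) for both the cancellation of the cross term and the bias--variance split to go through.
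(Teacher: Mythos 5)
Your proof is correct, but note that the paper itself does not prove this lemma at all: it is imported verbatim by citation as Lemma 7.1 of Wang, Magn\'usson, and Johansson \cite{wang2021convergence}, so there is no in-paper argument to compare against. What you have written is precisely the standard descent-lemma argument that underlies the cited result: apply $L$-smoothness to $x_{t+1}=x_t-\eta_t g_t$, take conditional expectation, use unbiasedness to collapse the cross term, split $\mathbb{E}_t[\|g_t\|^2]$ into $\|\nabla f(x_t)\|^2$ plus variance bounded by $\sigma^2$ via (A2), and use $\eta_t\leq\frac{1}{cL}$ with $c\geq 1$ to get the factor $\frac{\eta_t}{2}$, finishing with the tower property. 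Each step checks out, and your final inequality matches (\ref{for_lemma7}) exactly.

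Your closing caveat is also well placed: Assumption (A2) as stated in the paper controls only the variance, and the cancellation of the cross term genuinely requires $\mathbb{E}_t[g_t]=\nabla f(x_t)$, which neither (A1) nor (A2) supplies. In this paper that unbiasedness is implicit in the sampling scheme of the update (\ref{new_point}), where $g_t=\nabla f_{i_t}(x_t)$ with $i_t$ drawn uniformly from $\{1,\ldots,n\}$ and $f$ the average of the $f_i$, so the hypothesis holds but is tacit rather than stated. Flagging it is the right call; it is the only assumption your proof uses that the paper's assumption list does not make explicit.
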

{The following theorem provides $O(\frac{\ln T}{\sqrt{T}})$ rate of convergence for smooth non-convex functions without PL condition.}
\begin{theorem}\label{comp}
Under Assumptions { $A1$ and $A2$,} and for $c>1$, a single outer iteration of Algorithm \ref{alg1}, which corresponds to the SGD algorithm with the new proposed step sizes using $\eta_0=\frac{1}{cL}$ guarantees the following inequality:
\begin{eqnarray}
 \Bbb{E}[\|\nabla f(\bar{x}_T)\|^2]\leq \frac{\ln T}{\sqrt{T}-1}\left[\frac{2Lc\left(f(x_1)-f^*\right)}{\ln T} +\frac{\sigma^2}{Lc}\right]  \nonumber
\end{eqnarray}
where $\bar{x}_T$ is a random iterate drawn from the sequence $\{x_t\}
_{t=1}^T$ with probability {$\mathbb{P}[\bar{x}_T=x_t]=\frac{\eta_t}{\sum_{t=1}^T\eta_t}$.}
\end{theorem}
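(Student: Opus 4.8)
The plan is to start from the per-step inequality of Lemma~\ref{pre_proof}, sum it over $t=1,\dots,T$, and then divide by $\sum_{t=1}^T \eta_t$ to produce a weighted average of the squared gradient norms, which is exactly the quantity controlled by the randomized iterate $\bar{x}_T$. First I would verify that the hypothesis $\eta_t \leq \frac{1}{cL}$ of Lemma~\ref{pre_proof} holds under our choice $\eta_0 = \frac{1}{cL}$: since $\eta_t = \frac{\eta_0}{\sqrt{t}+\ln t}$ and $\sqrt{t}+\ln t \geq 1$ for all $t \geq 1$, we indeed have $\eta_t \leq \eta_0 = \frac{1}{cL}$, so the lemma applies at every iteration.

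Next I would sum \eqref{for_lemma7} over $t=1,\dots,T$. The left-hand side becomes $\frac{1}{2}\sum_{t=1}^T \eta_t\, \Bbb{E}[\|\nabla f(x_t)\|^2]$. The first part of the right-hand side telescopes to $\Bbb{E}[f(x_1)] - \Bbb{E}[f(x_{T+1})] \leq f(x_1) - f^*$, using that $f^*$ is the global minimum (so $f(x_{T+1}) \geq f^*$) and that $x_1$ is deterministic. The remaining term is $\frac{L\sigma^2}{2}\sum_{t=1}^T \eta_t^2$. At this point the two preliminary lemmas enter: Lemma~\ref{2} bounds $\sum_{t=1}^T \eta_t^2 \leq \eta_0^2 \ln T$, and Lemma~\ref{1} bounds the normalizing denominator $\sum_{t=1}^T \eta_t \geq \eta_0(\sqrt{T}-1)$ from below.

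Then I would identify the weighted average with $\Bbb{E}[\|\nabla f(\bar{x}_T)\|^2]$. By the definition of the sampling probability $\mathbb{P}[\bar{x}_T=x_t] = \frac{\eta_t}{\sum_{t=1}^T \eta_t}$, we have
\begin{equation*}
\Bbb{E}[\|\nabla f(\bar{x}_T)\|^2] = \frac{\sum_{t=1}^T \eta_t\, \Bbb{E}[\|\nabla f(x_t)\|^2]}{\sum_{t=1}^T \eta_t}.
\end{equation*}
Dividing the summed inequality through by $\sum_{t=1}^T \eta_t$ and substituting the two lemma bounds gives
\begin{equation*}
\Bbb{E}[\|\nabla f(\bar{x}_T)\|^2] \leq \frac{2\left(f(x_1)-f^*\right) + L\sigma^2 \eta_0^2 \ln T}{\eta_0(\sqrt{T}-1)}.
\end{equation*}
Finally I would plug in $\eta_0 = \frac{1}{cL}$ and algebraically rearrange: the numerator becomes $2(f(x_1)-f^*) + \frac{\sigma^2 \ln T}{c^2 L}$, and dividing by $\frac{\sqrt{T}-1}{cL}$ and factoring out $\frac{\ln T}{\sqrt{T}-1}$ yields the stated bound with the bracketed terms $\frac{2Lc(f(x_1)-f^*)}{\ln T}$ and $\frac{\sigma^2}{Lc}$.

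I expect the main obstacle to be bookkeeping rather than conceptual: carefully tracking the factors of $c$, $L$, and $\eta_0$ through the substitution so that the final expression matches the target form exactly, and being careful that the telescoping and the lower bound on the denominator are each applied in the correct direction of inequality. One subtlety worth checking is that the expectation in Lemma~\ref{pre_proof} is a total (rather than conditional) expectation so that summing is legitimate and the telescoping of the $f$ terms goes through cleanly; I would take expectations over all randomness up front to make the telescoping unambiguous.
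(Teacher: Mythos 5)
Your proposal is correct and takes essentially the same route as the paper's own proof: apply Lemma~\ref{pre_proof} termwise, telescope the $\Bbb{E}[f(x_t)]-\Bbb{E}[f(x_{t+1})]$ sum, invoke Lemmas~\ref{1} and~\ref{2} for the denominator and the $\sum_t \eta_t^2$ term, substitute $\eta_0=\frac{1}{cL}$, and factor out $\frac{\ln T}{\sqrt{T}-1}$. Your explicit check that $\eta_t \leq \frac{1}{cL}$ (since $\sqrt{t}+\ln t \geq 1$ for $t\geq 1$), which licenses the use of Lemma~\ref{pre_proof}, is a detail the paper leaves implicit.
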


\begin{proof}
Using the definition of $\bar{x}_T$ and Lemma \ref{pre_proof}, we have:
\begin{eqnarray}
 \Bbb{E}[\|\nabla f(\bar{x}_T)\|^2]=\frac{\eta_t\Bbb{E}[\|\nabla f(x_t)\|^2]}{\sum_{t=1}^T\eta_t}
     &\leq&\frac{2\sum_{t=1}^T\left[\Bbb{E}[f(x_t)]-\Bbb{E}[f(x_{t+1})]\right]}{\sum_{t=1}^T\eta_t}
     +\frac{L\sigma^2\sum_{t=1}^T\eta_t^2}{\sum_{t=1}^T\eta_t}\nonumber\\
      &\leq&\frac{2\left(f(x_1)-f^*\right)}{\sum_{t=1}^T\eta_t}+\frac{L\sigma^2\sum_{t=1}^T\eta_t^2}{\sum_{t=1}^T\eta_t}\nonumber
      \\
      &\leq&\frac{2\left(f(x_1)-f^*\right)}{\eta_0(\sqrt{T}-1)}+\frac{L\sigma^2\eta_0\ln T}{(\sqrt{T}-1)}\nonumber\\
          &=&\frac{2cL}{(\sqrt{T}-1)}\left(f(x_1)-f^*\right)+\frac{\sigma^2\ln T}{Lc(\sqrt{T}-1)},\label{comp1}
\end{eqnarray}
where the third inequality is obtained by using Lemmas \ref{1} and \ref{2}. {The expression (\ref{comp1}) can be rewritten as follows:
\begin{eqnarray}
   \Bbb{E}[\|\nabla f(\bar{x}_T)\|^2]\leq \frac{\ln T}{\sqrt{T}-1}\left[\frac{2Lc\left(f(x_1)-f^*\right)}{\ln T} +\frac{\sigma^2}{Lc}\right].
\end{eqnarray}}
\end{proof}
{
\begin{remark}
    Theorem  \ref{comp} implies that:
    \begin{equation*}
      \Bbb{E}[\|\nabla f(\bar{x}_T)\|^2]\leq O\left(\frac{\ln T}{\sqrt{T}}\right).
    \end{equation*}
\end{remark}
This result demonstrates that a single outer iteration of Algorithm \ref{alg1}, which corresponds to the SGD  based on the new modified step size enjoys an $O(\frac{\ln T}{\sqrt{T}})$ rate of convergence for smooth non-convex functions without the PL condition. Remarkably, this rate of convergence matches that of the traditional $\frac{1}{\sqrt{t}}$ step size \cite{wang2021convergence}.}

Utilizing the outcomes derived from Theorem \ref{comp}, we can now calculate the convergence rate for the warm restart SGD algorithm. 
\begin{corollary}\label{warm}  {(SGD with warm restarts): Under Assumptions A1 and  A2, for a given value of $T$  and $\eta_0=\frac{1}{cL}$, Algorithm \ref{alg1} guarantees the following convergence:}
\begin{equation}
\Bbb{E}\|\nabla f(\bar{x}_T)\|^2\leq \frac{\ln T}{\sqrt{T}-1}\left(\frac{2lcL}{\ln T}\left(f(\tilde{x}_{1})-f^*\right)+\frac{\sigma^2l }{Lc} \right),\nonumber
\end{equation}
where $f(\tilde{x}_1)=\max_i{f(x_{1_i})}$ for $i=1,2,...,l$.
\end{corollary}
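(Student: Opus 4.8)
The plan is to reduce the warm-restart bound to $l$ separate applications of Theorem \ref{comp}, one per outer epoch, and then aggregate. The observation that makes this work is that each outer iteration of Algorithm \ref{alg1} is itself a complete run of SGD with exactly $T$ inner steps under the identical schedule $\eta_t=\frac{\eta_0}{\sqrt{t}+\ln t}$; the only thing that changes between epochs is the starting point.

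First I would fix an outer index $i\in\{1,\dots,l\}$ and let $x_{1_i}$ denote the first iterate of the $i$-th inner loop. Because the schedule restarts at $t=1$ in every epoch and $\sqrt{t}+\ln t\geq 1$ for all $t\geq 1$, the choice $\eta_0=\frac{1}{cL}$ forces $\eta_t\leq\frac{1}{cL}$ throughout the epoch, so the hypothesis of Lemma \ref{pre_proof} holds uniformly and Theorem \ref{comp} applies verbatim with $x_1$ replaced by $x_{1_i}$. This yields, for the per-epoch random iterate $\bar{x}_T^{(i)}$,
\[
\mathbb{E}[\|\nabla f(\bar{x}_T^{(i)})\|^2]\leq \frac{\ln T}{\sqrt{T}-1}\left[\frac{2Lc\left(f(x_{1_i})-f^*\right)}{\ln T}+\frac{\sigma^2}{Lc}\right].
\]

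Next I would sum this inequality over $i=1,\dots,l$ and invoke the definition $f(\tilde{x}_1)=\max_i f(x_{1_i})$, which dominates every individual term $f(x_{1_i})-f^*$. Factoring out the common prefactor $\frac{\ln T}{\sqrt{T}-1}$ and collecting the $l$ copies of each bracketed term reproduces exactly the claimed right-hand side $\frac{\ln T}{\sqrt{T}-1}\left(\frac{2lcL}{\ln T}(f(\tilde{x}_1)-f^*)+\frac{\sigma^2 l}{Lc}\right)$.

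The main obstacle here is not computational but interpretational: reconciling the single symbol $\bar{x}_T$ on the left-hand side with the aggregate of $l$ per-epoch quantities. Passing to the maximum over starting values is precisely what lets the argument sidestep tracking the warm-restart coupling, i.e.\ how $f(x_{1_{i+1}})$ relates to the terminal iterate of epoch $i$, at the cost of a looser but clean bound. I would therefore make explicit whether the left-hand side is to be read as $\sum_{i=1}^{l}\mathbb{E}[\|\nabla f(\bar{x}_T^{(i)})\|^2]$ or as an expectation under a sampling distribution over epochs chosen to realize that sum, since both readings produce the stated inequality.
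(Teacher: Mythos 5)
Your proposal matches the paper's own proof essentially step for step: Theorem \ref{comp} is applied once per outer epoch with $x_1$ replaced by $x_{1_i}$, the $l$ resulting bounds are summed, and each $f(x_{1_i})-f^*$ is dominated by $f(\tilde{x}_1)-f^* = \max_i f(x_{1_i})-f^*$ to produce the factor $l$ in the stated bound. The interpretational ambiguity you flag at the end is resolved in the paper by reading the left-hand side as the best epoch: its first step bounds $\min_i \mathbb{E}\|\nabla f(\bar{x}_{T_i})\|^2$ by the sum $\sum_{i=1}^l \mathbb{E}\|\nabla f(\bar{x}_{T_i})\|^2$ of the nonnegative per-epoch quantities, which is one of the two readings you proposed.
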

\begin{proof}
    {Theorem \ref{comp} is true for all $i=1,2,...,l$. Therefore,  we have:
    \begin{eqnarray*}
  &&\min_i (\Bbb{E}\|\nabla f(\bar{x}_{T})\|^2)\leq \sum_{i=1}^l  \Bbb{E}\|\nabla f(\bar{x}_{T_i})\|^2 \\
  &\leq& \sum_{i=1}^{l}\left(\frac{2cL}{\sqrt{T}-1}\left(f(x_{1_i})-f^*\right)+\frac{\sigma^2\ln T}{Lc(\sqrt{T}-1)}\right)\\
   &\leq& l \max_i\left(\frac{2cL}{\sqrt{T}-1}\left(f(x_{1_i})-f^*\right)+\frac{\sigma^2\ln T}{Lc(\sqrt{T}-1)}\right)\\
   &=&\frac{2lcL}{\sqrt{T}-1}\left(f(\tilde{x}_{1})-f^*\right)+\frac{\sigma^2l \ln T}{Lc(\sqrt{T}-1)}\\
   &=&\frac{\ln T}{\sqrt{T}-1}\left(\frac{2lcL}{\ln T}\left(f(\tilde{x}_{1})-f^*\right)+\frac{\sigma^2l }{Lc} \right)
    \end{eqnarray*}
    in which $f(\tilde{x}_1)=\max_i\{f(x_{1_i})\}$.}
\end{proof}

\subsection*{\centerline{ 4. Numerical Results}} \setcounter{section}{4}\setcounter{theorem}{0}

In this section, we performed two sets of experiments to assess the effectiveness of our proposed scheme. The first series of experiments involved classifying images on two different datasets: FashionMNIST and CIFAR10. These datasets are commonly used in computer vision research for image classification tasks. The second series of experiments focused on the binary classification of patterns, using five different datasets: a1a, a2a, mushrooms, rcv1, and w1a. These datasets cover a diverse range of patterns and are commonly used in machine learning research for binary classification tasks. To assess the performance of our proposed approach, we compared it with state-of-the-art methods through experimental studies. By conducting these comparisons, we gain insights into the effectiveness of our approach and how it stacks up against existing techniques. Now, let's dive deeper into the mentioned methods, datasets, and the learning model applied for the classification task.
\subsection*{\centerline{ 4.1. Methods}}
{Here, we conduct a comprehensive comparison study to evaluate various step sizes. We consider the following step sizes:}
\begin{itemize}
    \item $\eta_t=constant$
    \item $\eta_t=\frac{\eta_0}{1+\alpha\sqrt{t}}$
     \item $\eta_t=\frac{\eta_0}{1+\alpha t}$
     \item $\eta_t=\frac{\eta_0}{2}\left(1+\cos\frac{t\pi}{T}\right)$
     \item $\eta_t=\eta_0\left(\frac{1}{1+\alpha(\sqrt{t}+\ln t)}\right)$
\end{itemize}
We have various step size update strategies with the following names: SGD with constant step size, step size with $O(\frac{1}{\sqrt{t}})$ decay, step size with $O(\frac{1}{t})$ decay, cosine step size update, and the new step decay method. The parameter $t$ represents the iteration number of the inner loop, and each outer iteration involves multiple iterations for training on mini-batches.

Additionally, we  compare the results of the newly proposed step decay method with Adam \cite{kingma2014adam}, SGD+Armijo method \cite{vaswani2019painless}, PyTorch’s ReduceLROnPlateau scheduler5 (abbreviated as ReduceLROnPlateau), and stagewise step size. In this comparison, we refer to the points where the step size decreases in the stagewise step decay method as milestones. It's worth noting that since Nesterov momentum is used in all SGD variants, the stagewise step decay method essentially covers the performance of multistage accelerated algorithms (e.g., \cite{aybat2019universally}).
\subsection*{\centerline{ 4.2. Multi-Class Classification using Deep Networks}}

FashionMNIST is a dataset that includes a training set of $50,000$ and a test set of $10,000$ grayscale images. Each image in this dataset has a size of $28*28$ pixels. For the classification task on this dataset, we employed a Convolutional Neural Network (CNN) model. {Let us dissect the architecture of the CNN model we used for this task}. It consists of two convolutional layers. The size of the filter used in each convolutional layer is $5*5$. We applied padding of $2$ to ensure that the spatial dimensions of the output feature maps match the input size. The model also incorporates two max-pooling layers with a kernel size of $2*2$. Max-pooling reduces the spatial dimensions and helps in capturing important features while discarding unnecessary details. To further process the extracted features, the model includes two fully connected layers. Each of these layers has $1024$ hidden nodes. The activation function used for the hidden nodes is the Rectified Linear Unit (ReLU), which helps introduce non-linearity into the model and allows it to learn complex patterns effectively. 

In order to prevent overfitting, a dropout technique is applied with a probability of 0.5 in the hidden layer of the deep model. Dropout randomly sets a fraction of the input units to zero during training, forcing the network to learn robust representations. To evaluate and compare the performance of different algorithms, we utilized the cross-entropy function as the loss function.

The CIFAR10 dataset is composed of $60,000$ color images, each with a size of $32*32$ pixels. These images are divided into $10$ different classes, and each class contains $6,000$ images. The dataset is further split into a training set of $50,000$ images and a test set of $10,000$ images.
During the training process on this dataset, a batch size of $128$ is utilized. This means that each epoch of training comprises $390$ iterations. 
To evaluate the performance of the algorithms on the CIFAR10 dataset, we employed a deep learning architecture known as the $20$-layer Residual Neural Network (ResNet). ResNet was introduced by \cite{he2016deep} and has proven to be highly effective in various computer vision tasks. The loss function used in this model is the cross-entropy loss. 

A grid search was conducted to determine the initial values for parameters $\eta_0$ and $\alpha$ for FashionMNIST and CIFAR10, resulting in $\{0.05,0.15\}$ and $\{0.0253,0.025\}$, respectively. For the remaining step sizes, the initial values from \cite{li2021second} were employed.

\begin{table*}[t] 
\centering
\begin{tabular}{ |p{1.7cm}|p{2.4cm}|p{2cm}|p{2cm}|p{2cm}|}
\hline
Data set & Dimension (d)   & Training Set Size & Test Set Size   & Kernel Bandwith  \\
  \hline
     a1a  &   $123$ & $24765$ &   $6191$ & $1$\\
 a2a &   $123$  & $24237$& $6060$   & $1$ \\
mushrooms & $112$  & $6499$ & $1625$ & $0.5$\\
rcv1 & $47236$  & $16194$ & $4048$ & $0.25$\\
w1a & $300$  & $37818$ & $9454$ & $1$\\
\hline
\end{tabular}
    \caption{Details for binary classification datasets.}
    \label{table:dataset-details}
\end{table*}
\subsection*{\centerline{ 4.3. Binary Classification with Kernels}}

This series of experiments aims to classify the data into two classes using the Radial Basis Function (RBF) kernel without introducing any regularization techniques. We experiment with five standard datasets: a1a, a2a, mushrooms, rcv1 and w1a from LIBSVM \cite{chang2011libsvm}.  {These datasets have been widely adopted in the machine learning community \cite{Vrbanc}, which allows researchers to compare the performance of various classification algorithms on the same standardized data.}  
To create these datasets, we exclusively utilized the training sets provided by the LIBSVM library \cite{chang2011libsvm}, and performed an $80:20$ split, where $80\%$ of the data was designated as the training set and the remaining $20\%$ was set aside as the test set. 

The a1a dataset is a widely used benchmark dataset in the field of machine learning and data mining. It consists of binary classification tasks where the goal is to predict whether a person's income exceeds $50,000$ based on various attributes such as age, education, marital status, occupation, etc. The a2a dataset is another well-known benchmark dataset that is used for binary classification tasks. It is similar to the a1a dataset in that it focuses on predicting income, but it contains additional attributes and a larger number of instances. 

The mushrooms dataset is a popular dataset used in the field of classification. It contains attributes of various mushrooms, such as cap shape, cap color, odor, gill size, etc., and the target variable is whether the mushroom is edible or poisonous. The rcv1 dataset, also known as Reuters Corpus Volume 1, is a large collection of news articles from Reuters, a major news agency. It consists of over $800,000$ documents categorized into topics such as business, politics, sports, health, etc. The rcv1 dataset is often used for tasks such as text classification, information retrieval, and natural language processing research. The w1a is derived from the web page dataset. It has two categories and $300$ sparse binary keyword attributes. $2,477$ examples are used, among which $72$ examples are positive. Table \ref{table:dataset-details} illustrates the details for each mentioned dataset. 

For all the datasets mentioned, the initial values of parameters $\eta_0$ and $\alpha$ in the new proposed step size, used for binary classification with kernels, are set to $0.05$ and $0.00001$ respectively. On the other hand, the initial values from the study by \cite{li2021second} were utilized for the remaining step sizes.
\subsection*{\centerline{ 4.4. Results and Discussion}}

Based on Figure \ref{fig:Deep-Results} and Table \ref{fashion}, the newly suggested step size demonstrates impressive results. In the FashionMnist dataset, it achieves a training loss that is nearly zero, comparable to the performance of well-established methods like SGD+Armijo. Furthermore, it outperforms all other methods in terms of test accuracy. In the CIFAR10 dataset, SGD with the new step size outperforms the previously studied method with a step size of $O(\frac{1}{\sqrt{t}})$, which is considered the best based on both training loss and test accuracy as illustrated in Figure \ref{fig:Deep-Results}. Table \ref{Table_Comparison_results} also emphasises the superiority of the SGD with new step size over the $\frac{1}{\sqrt{t}}$ step decay in both FashionMnist and CIFAR10 datasets. 

Based on the observations made in Figures \ref{fig:RBF-Results1} and  \ref{fig:RBF-Results2}, it can be seen that the implementation of SGD with a new step decay consistently achieves the highest performance in a1a, a2a, mushrooms, rcv1, and w1a datasets. This is evident in terms of both the training loss and test accuracy in all mentioned datasets. Additionally, this method demonstrates a faster convergence to a satisfactory solution compared to the alternative approaches. 
  {Table \ref{Table_Comparison_results2} illustrates that the introduced step size resulted in a reduction of the loss functions by $0.01$, $0.02$, $0.03$, $0.003$, and $0.02$ for the a1a, a2a, mushrooms, rcv1, and w1a datasets, respectively, in comparison to the $\frac{1}{\sqrt{t}}$ step size. Additionally, the new step size enhances the accuracy of the a1a dataset by $0.7\%$ when contrasted with the $\frac{1}{\sqrt{t}}$ step size.}
\begin{figure}
  \centering
    \includegraphics[width=1\textwidth]{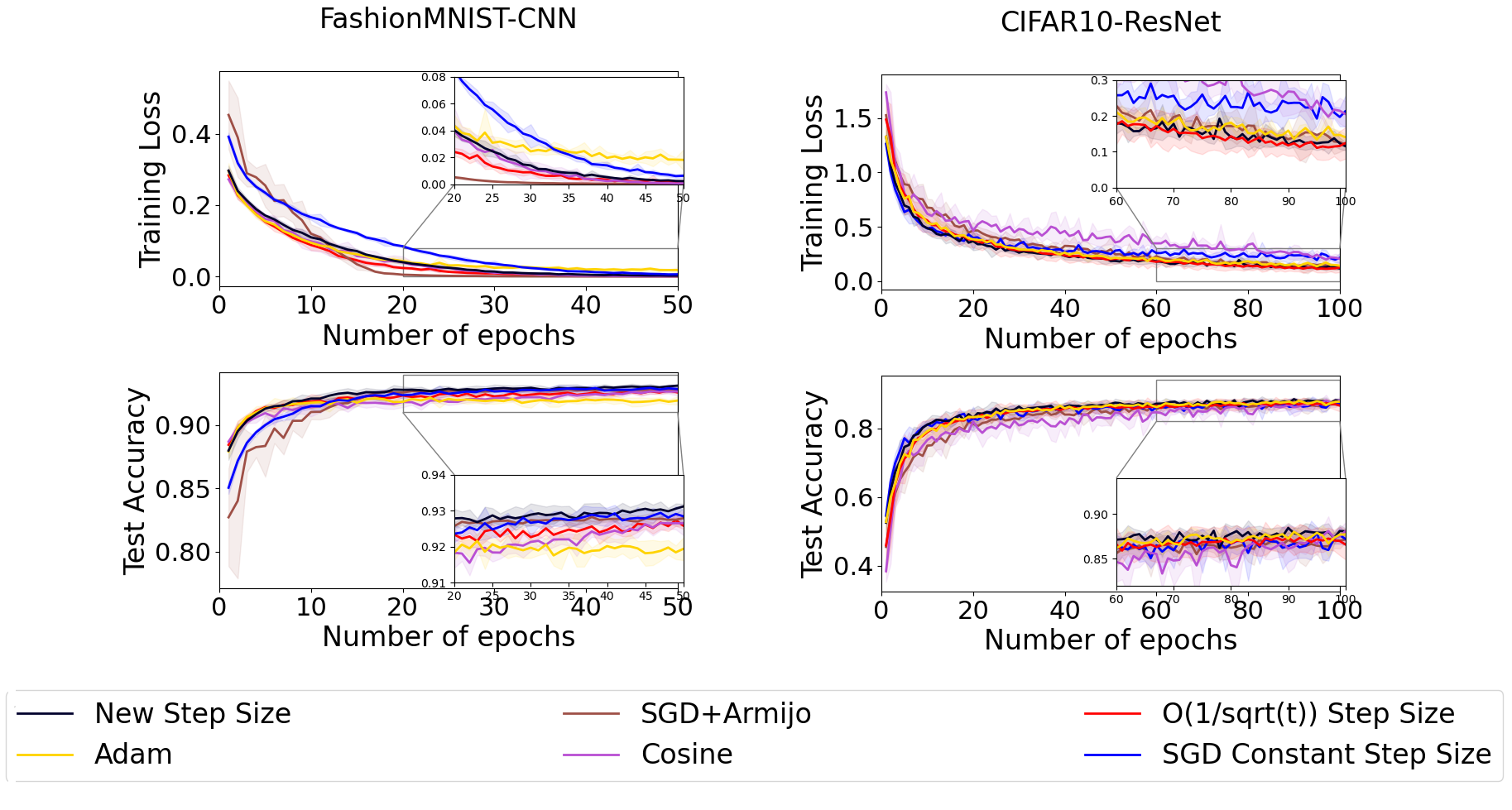}
  \caption{Comparison of new proposed step size and five other step sizes on FashionMNIST and CIFAR10 datasets.}
  \label{fig:Deep-Results}
\end{figure}
\begin{table*}[t] 
\centering
\begin{tabular}{ |p{4cm}|p{3.5cm}|p{3cm}| }
 \hline
 Methods & Training loss   & Test accuracy \\
  \hline
Constant Step Size &   $0.0007\pm 0.0003$ & $0.9299\pm 0.0016$
 \\
 $O(\frac{1}{\sqrt{t}})$ Step Size & $0.0011\pm 0.0003$ & $0.9261
 \pm 0.0007$\\
 Adam &   $0.0131 \pm 0.0017$  & $0.9166 \pm 0.0019 $\\
 SGD+Armijo & \bf{6.73E-05 $\pm$ 0.00} & $0.9277 \pm 0.0012$ \\
 Cosine step size& $0.0004 \pm 1.1E-05$ & $0.9284 \pm 0.0005$  \\
 New Step Size & ${0.002}  \pm {0.00}$ & $\textbf{0.931} \pm  \textbf{0.00}$ \\
 \hline
\end{tabular}
    \caption{The average final training loss and test accuracy on the FashionMNIST dataset, along with the $95\%$ confidence intervals obtained from $5$ runs starting from different random seeds.}
    \label{fashion}
\end{table*}
{\scriptsize
\begin{table}
\centering
\begin{tabular}{ |c|c|c|c|c| }
 \hline
Step sizes & $\frac{1}{\sqrt{t}}$   & $\frac{1}{\sqrt{t}}$&$\frac{1}{\sqrt{t}+\ln t}$&$\frac{1}{\sqrt{t}+\ln t}$  \\
 \hline
Data set & Training loss   & Test accuracy& Training loss   & Test accuracy  \\
  \hline
 FashionMNIST  &    $0.00 \pm 0.00$ & $0.926\pm 0.00$ &   $\textbf{0.00}  \pm \textbf{0.00}$ & $\textbf{0.93} \pm  \textbf{0.00}$\\
 CIFAR10 &   $0.12 \pm  0.02$  & $0.87 \pm 0.00$& $\textbf{0.08} \pm \textbf{0.01}$   & $\textbf{0.89} \pm \textbf{0.00}$ \\
 \hline
\end{tabular}
    \caption{Performance comparison of $\frac{1}{\sqrt{t}}$ and new step size on FashionMNIST and CIFAR10 datasets.}
    \label{Table_Comparison_results}
\end{table}}
{\scriptsize
\begin{table}
\centering
\begin{tabular}{ |c|c|c|c|c| }
 \hline
Step sizes & $\frac{1}{\sqrt{t}}$   & $\frac{1}{\sqrt{t}}$&$\frac{1}{\sqrt{t}+\ln t}$&$\frac{1}{\sqrt{t}+\ln t}$  \\
 \hline
Data set & Training loss   & Test accuracy& Training loss   & Test accuracy  \\
  \hline
 a1a  &    $0.44$ & $0.822$ &   $\textbf{0.43}$ & $\textbf{0.829}$\\
a2a &   $0.39$  & $0.82$& $\textbf{0.37}$   & $0.82$ \\
mushrooms &   $0.57$  & $0.98$& $\textbf{0.54}$   & $0.98$ \\
rcv1 &   $0.531$  & $0.96$& $\textbf{0.528}$ & $0.96$ \\
w1a &   $0.39$  & $1$& $\textbf{0.37}$   & $1$ \\
 \hline
\end{tabular}
    \caption{Performance comparison of $\frac{1}{\sqrt{t}}$ and new step size on datasets for binary classification task.}
    \label{Table_Comparison_results2}
\end{table}}
\begin{figure}
  \centering
  \includegraphics[width=1\textwidth]{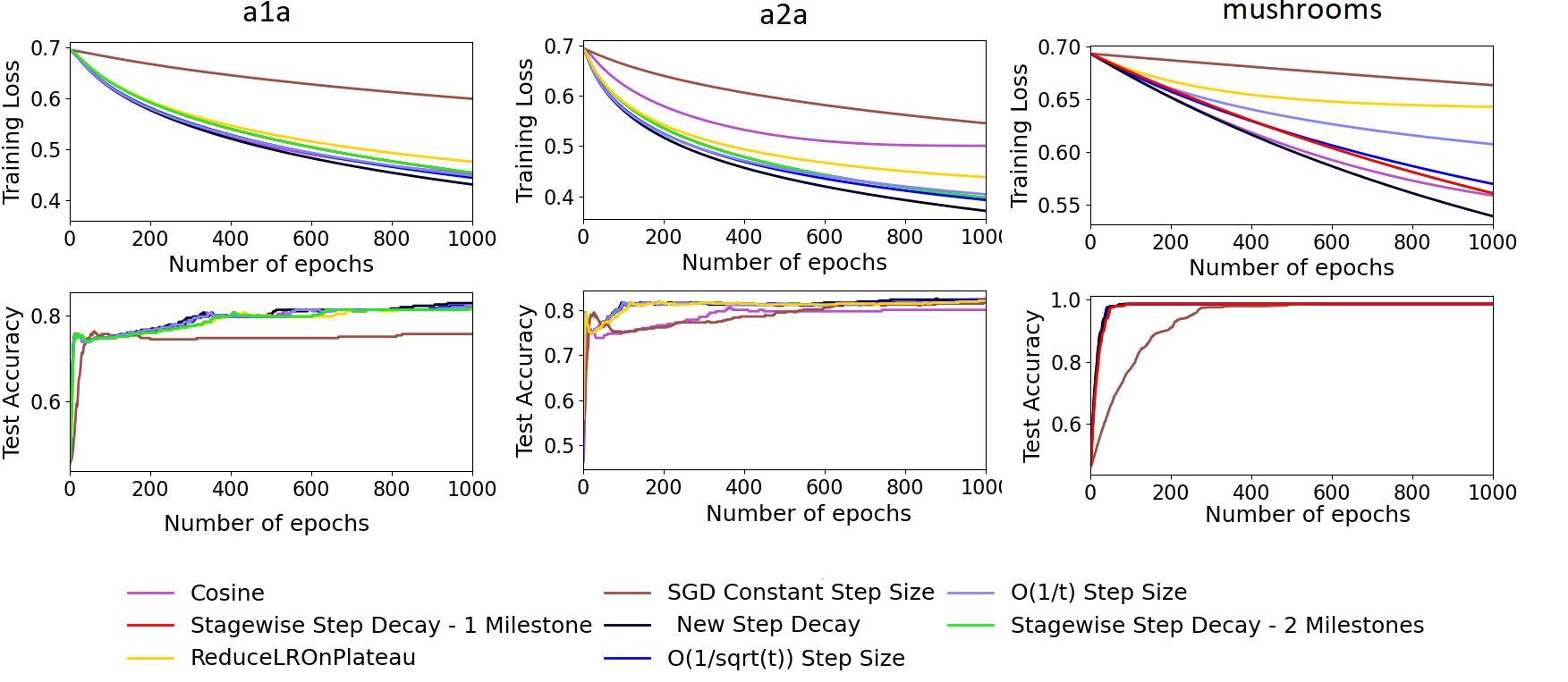}
  \caption{Comparison of new proposed step size and seven other step sizes on a1a, a2a, and mushrooms datasets. }
  \label{fig:RBF-Results1}
\end{figure}
\begin{figure}
  \centering
    \includegraphics[width=1\textwidth]{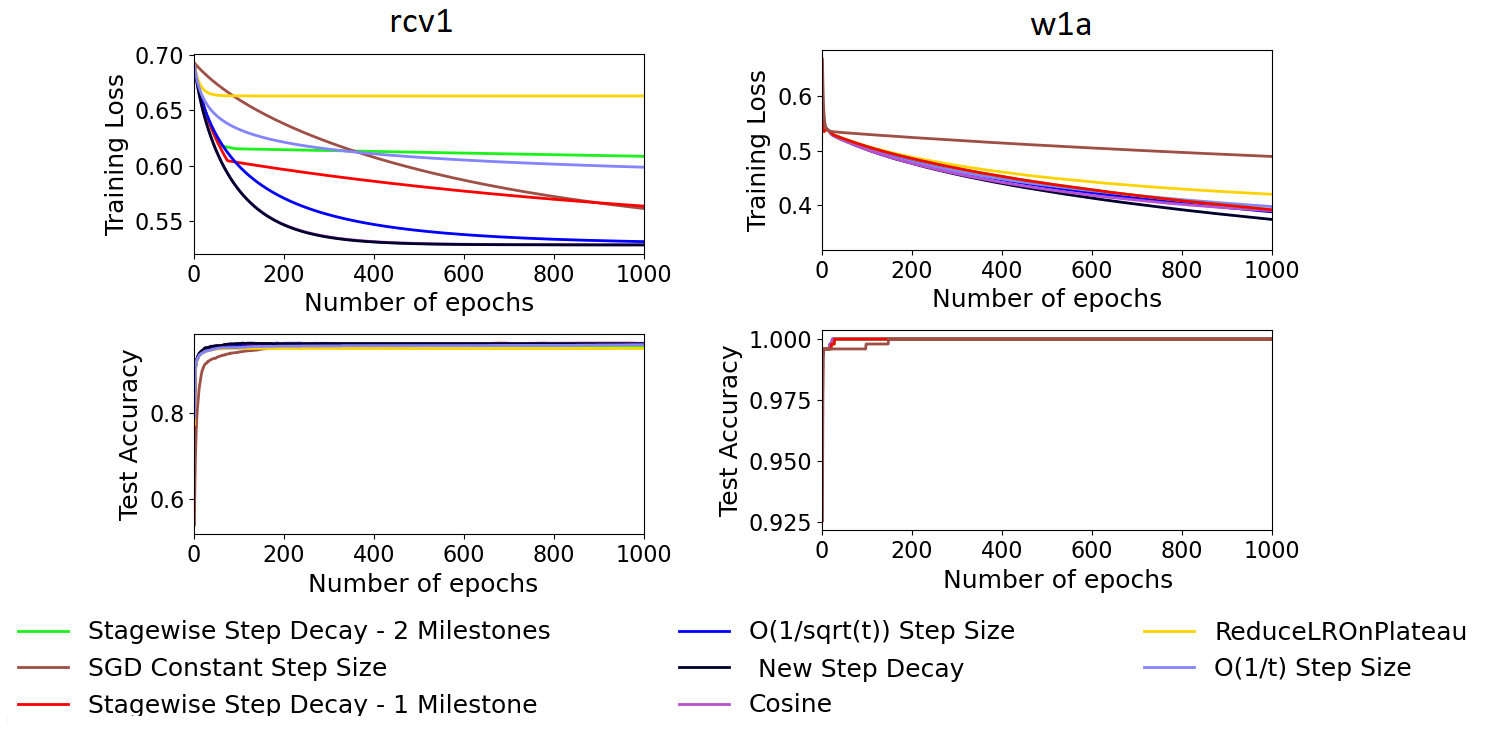}
  \caption{Comparison of new proposed step size and seven other step sizes on rcv1 and w1a datasets.}
  \label{fig:RBF-Results2}
\end{figure}

\section*{\centerline{ 5. Conclusion}}

This paper introduced a novel approach to enhance the stochastic gradient descent (SGD) algorithm by modifying the decay step size based on $\frac{1}{\sqrt{t}}$. We established a convergence rate of $O(\frac{\ln T}{\sqrt{T}})$ for smooth non-convex functions without the Polyak-Łojasiewicz condition. The numerical experiments conducted on image classification tasks using the FashionMNIST, and  CIFAR10  datasets demonstrated the effectiveness of the proposed approach, with accuracy improvements of  { $0.5\%$},  and $1.4\%$, respectively, over the traditional $\frac{1}{\sqrt{t}}$ step size.  {Furthermore, in the case of binary datasets, the introduced step size exhibited improvements in the loss function by $0.01$, $0.02$, $0.03$, $0.003$, and $0.02$ for the a1a, a2a, mushrooms, rcv1, and w1a datasets, respectively, when compared to the $\frac{1}{\sqrt{t}}$ step size.}

 {As a result of this paper, the combination of the $\frac{1}{\sqrt{t}}$ step size with the $\ln t$ function has led to an enhancement in the efficiency of the $\frac{1}{\sqrt{t}}$ step size. This finding suggests a potential avenue for future research where other step sizes could be similarly combined with suitable functions to boost their efficiency.}

\vskip 3mm
\noindent\textbf{Conflicts of Interest.} 
The author affirms that there is no conflict of interest concerning the publication of this manuscript. Furthermore, the authors have diligently addressed ethical concerns, such as plagiarism, informed consent, misconduct, data fabrication and/or falsification, double publication and/or submission, and redundancy.

\end{document}